\DeclareRobustCommand{\uvec}[1]{{%
		\ifcsname uvec#1\endcsname
		\csname uvec#1\endcsname
		\else
		\bm{\mathbf{#1}}%
		\fi
}}
\theoremstyle{plain}
\newtheorem{prop}{Proposition}
\newtheorem{thm}{Theorem}
\theoremstyle{definition}
\newtheorem{example}{Example}
 \theoremstyle{remark}
 \newtheorem{rem}{Remark}
 \theoremstyle{normal}
\title{\LARGE \bf
Singularity-Free Inverse Dynamics for Underactuated Systems with a Rotating Mass
}
\author{Seyed Amir Tafrishi$^{1}$, Mikhail Svinin$^{2}$ and Motoji Yamamoto$^{1}$
	\thanks{$^{1}$ Seyed Amir Tafrishi and Motoji Yamamoto are with the Department of Mechanical Engineering, Kyushu University, Kyushu, Japan {\tt\small amir@ce.mech.kyushu-u.ac.jp} and {\tt\small yama@mech.kyushu-u.ac.jp}}%
	\thanks{$^{2}$ Mikhail Svinin is with the Department of Information Science and Engineering, Ritsumeikan University, Kyoto, Japan \break	{\tt\small svinin@fc.ritsumei.ac.jp}}%
}
\begin{document}

\maketitle

\begin{abstract}
	Motion control of underactuated systems through the inverse dynamics contains configuration singularities. These limitations in configuration space mainly stem from the inertial coupling that passive joints/bodies create. In this study, we present a model that is free from singularity while the trajectory of the rotating mass has a small-amplitude sine wave around its circle. First, we derive the modified non-linear dynamics for a rolling system. Also, the singularity regions for this underactuated system is demonstrated. Then, the wave parameters are designed under certain conditions to remove the coupling singularities. We obtain these conditions from the positive definiteness of the inertia matrix in the inverse dynamics. Finally, the simulation results are confirmed by using a prescribed Beta function on the specified states of the rolling carrier. Because our algebraic method is integrated into the non-linear dynamics, the proposed solution has a great potential to be extended to the Lagrangian mechanics with multiple degrees-of-freedom.
\end{abstract}

\section{INTRODUCTION}

\label{intro}
Controlling the mechanisms of the machines and robots require an accurate mathematical model of the system. This model should contain all the physical characteristics of the system while it is computationally efficient. However, the control of the underactuated systems with passive bodies have certain challenges that can originate from the derived model \cite{liu2013survey}. Physically, the underactuated systems [see Fig. \ref{Fig:generality}] consist of two main parts: First, a rotating mass that moves by an actuator. Second, a passive body that displaces depending on the rotational mass.

Early studies for underactuated mechanisms have begun by the introduced Pendubot \cite{spong1995pendubot} and Acrobat \cite{spong1995swing} as the two-link manipulators. The general motion equations of a passive and an active rotating bodies with rotational angles of $\bm{q}=[\theta,\gamma]^T$ can be presented as
\begin{align}
\uvec{M}(\bm{q})\bm{\ddot{q}}+\uvec{h}(\bm{q},\bm{\dot{q}})=\uvec{u},
\label{Eq:MotionEquationGeneral}
\end{align}
where inertial matrix $\uvec{M}(\bm{q})$, velocity dependencies $N_i$ and gravity terms $G_i$ in $\uvec{h}(\bm{q},\bm{\dot{q}})$ and control inputs $\uvec{u}$ are defined by
\begin{align*}{\small
\uvec{M}(\bm{q})= \left[\begin{array}{cc} M_{11}\;  M_{12}\\ M_{21}\; M_{22}\\ \end{array}\right],\uvec{h}(\bm{q},\bm{\dot{q}})= \left[\begin{array}{c}N_1+G_1\\ N_2+G_2\\ \end{array}\right],\uvec{u}=\left[\begin{array}{c}0\\ \tau_{\gamma}\\ \end{array}\right].}
\end{align*}
The underactuated systems (\ref{Eq:MotionEquationGeneral}) with two degrees of freedom \cite{liu2013survey} have a great common, similar inertial matrix $\uvec{M}(\bm{q})$, with certain underactuated spherical robots \cite{kayacan2012modeling,svinin2015dynamic,TafrishiASME2019,TAfrishiRussiCha2019}. This inertial similarity help us to generalize our studying problem. The rolling spherical robots propel their passive carrier with a rotating mass-point \cite{ilin2017dynamics,TafrishiASME2019,TAfrishiRussiCha2019} or pendulum \cite{kayacan2012modeling,svinin2015dynamic} as Fig. \ref{Fig:generality}.

From the control point-of-view, Agrawal in 1991 \cite{agrawal1991inertia} found that when the inertial matrix of the inverse dynamics is singular at certain configurations, the integration of differential equations breaks. Arai and Tachi \cite{arai1991position} showed that the inverse dynamics in a two-degree-of-freedom underactuated manipulator hit the singularity when coupled inertia, $M_{21}$ term in the first constraint equation imposed by the passive body in (\ref{Eq:MotionEquationGeneral}), becomes zero and this property limits the domain of the control. Spong \cite{spong1994partial} proposed a Strong Inertial Coupling condition for these underactuated systems under the positive definiteness of the inertia matrix. The condition grants a singular free inverse dynamics under certain geometric properties. In other words, these mathematical singularities that originate from the inversed terms of the inertial matrix $\uvec{M}(\uvec{q})$, limit the mechanism to certain geometric parameters and create a challenge in manipulation around these singularity regions. Furthermore, a coupling index was proposed to determine the actuability of underactuated systems with different geometries \cite{bergerman1995dynamic}. Later studies took place by following these coupling conditions \cite{spong1994partial,bergerman1995dynamic} for controlling the Pendubot \cite{zhang2003hybrid} and Acrobat \cite{spong1995swing,spong1996energy}. The same problem was highlighted and control strategies are developed relative to this limitation for spherical robots \cite{svinin2015dynamic,TAfrishiRussiCha2019}. Also, because the spherical carrier requires consecutive rotations without any angular limitations, the singularities due to inertial coupling become more challenging to deal with.

\begin{figure}[t!]
	\centering
	\includegraphics[width=2.5 in]{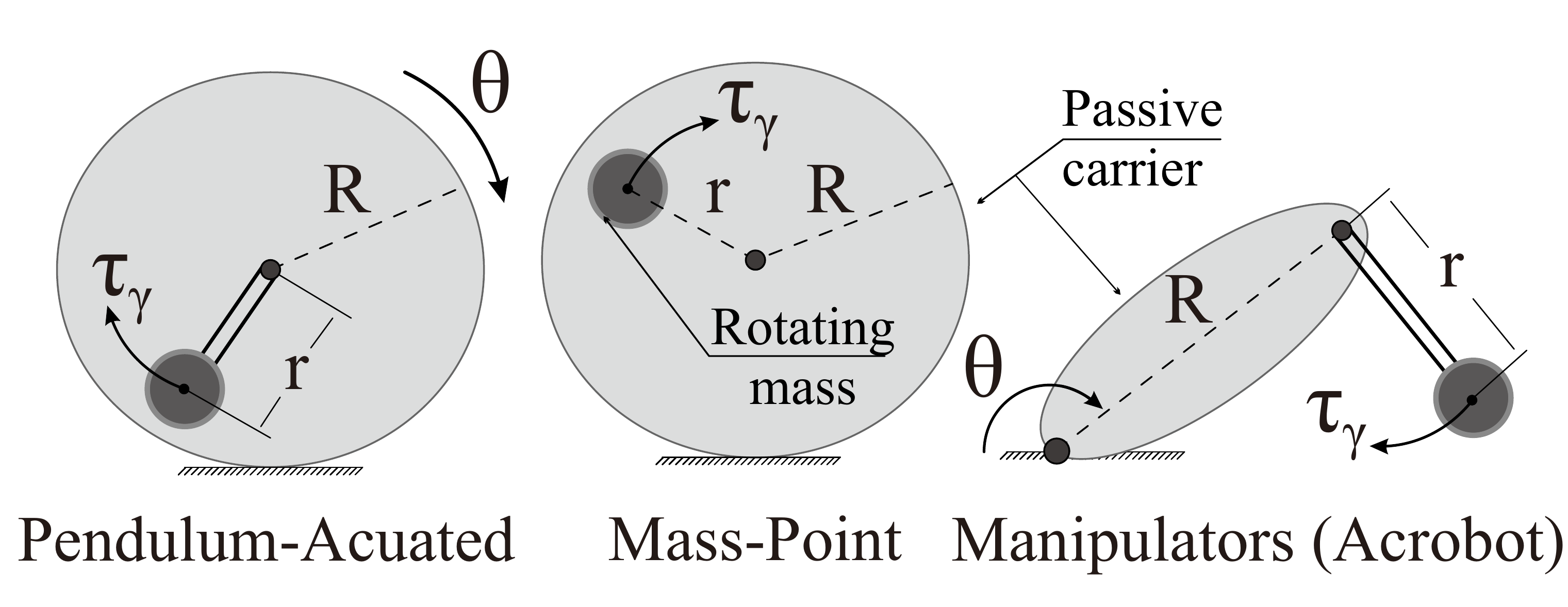}		
	\caption{Different mass-rotating systems with passive body/carrier that their inertial matrices $\uvec{M}(\bm{q})$ are similar.}\label{Fig:generality}
\end{figure}



Our motivation in this paper is to propose a modeling approach  for the underactuated system in which the limitations on the geometric parameterization and configuration singularities due to inertial coupling \cite{spong1994partial,bergerman1995dynamic} are avoided. Thus, there aren't any physical design limitations due to mathematical singularities. This modified model is free from any complex algorithm. Also, we check the singularity regions in conventional rolling systems for the first time. In this work, we begin by defining the sine wave with small amplitude that is combined around the circular rotation of mass. Next, the new kinematic model is applied to the Lagrangian equations for finding the rolling spherical carrier's modified dynamics. Then, the inverse dynamics are derived and singularity regions are analytically demonstrated for this rolling system. Finally, a theory between wave and singularity regions is developed. This condition helps to design our proposed wave proportional to the considered physical system. Furthermore, the theoretical findings are verified for a rolling carrier with an actuating mass-point by the simulations. Also, a classic model of the pendulum-actuated system is compared with our modified model to clarify the validity of the theory in simulation space.

For the rest of this paper, we show the kinematics of the wavy trajectory for the rotating mass and also derive the non-linear dynamics in Section II. In Section III, by finding the inverse dynamics, the singularity-free conditions for the obtained model are explained. Finally, Section IV shows example simulations for a mass-point system with obtained singularity-free conditions and compares the modified model with the classical one.

\section{MODIFIED DYNAMICAL MODEL}
\label{DynamicalModel}
In this section, we introduce a sinusoidal trajectory that is combined around a circle for the rotating mass. Next, the developed kinematics is substituted into the Lagrangian function of a rolling system. Finally, the Lagrangian method is utilized to find the nonlinear dynamics of this underactuated model. Using this property, we will propose a theory that the singularities due to inertial coupling are removed through designing the included wave.
\subsection{Trajectory with a Combined Wave}
\label{WavesCircle}
 Let us assume that the rotating mass has an orientation angle of $\gamma$ with respect to the center of the spherical carrier with a radius of $R$ [see Fig. \ref{Fig:wavy2}]. Also, the carrier is rolling with an angle of $\theta$. Then, the position vector of the mass that rotates around a small-amplitude sinusoidal curve on the circle with a radius of $r$ is defined as  
\begin{align}
\begin{split}
&\uvec{D}_c= -\left[\left(r+a\sin \left( n (\gamma+\theta)+\varepsilon\right)\right)\cos(\gamma+\theta) \right]\uvec{k} \\
&- \left[ \left(r+a\sin \left( n (\gamma+\theta) +\varepsilon\right)\right) \sin (\gamma+\theta) \right] \uvec{j},
\end{split}
\label{Eq:Sinsiodal}
\end{align}
where $a$, $n$ and $\varepsilon$ are the amplitude of sinusoidal wave, the frequency of created periodic wave on the circle of radius $r$ and constant phase shift of the curve, respectively. In the classic mass-rotating models, this trajectory becomes 
\begin{align}
\uvec{D}_w= -r(\cos(\gamma+\theta)\uvec{k} + \sin (\gamma+\theta) \uvec{j})
\label{Eq:classicrotatingmasstrajectry}
\end{align} 
where $a$, $n$ and $\varepsilon$ are zero in (\ref{Eq:Sinsiodal}) that gives a circular rotation with radius $r$ \cite{TafrishiASME2019,TAfrishiARM2019}. 
\begin{figure}[t!]
	\centering
	\includegraphics[width=1.6 in,height=1.25 in]{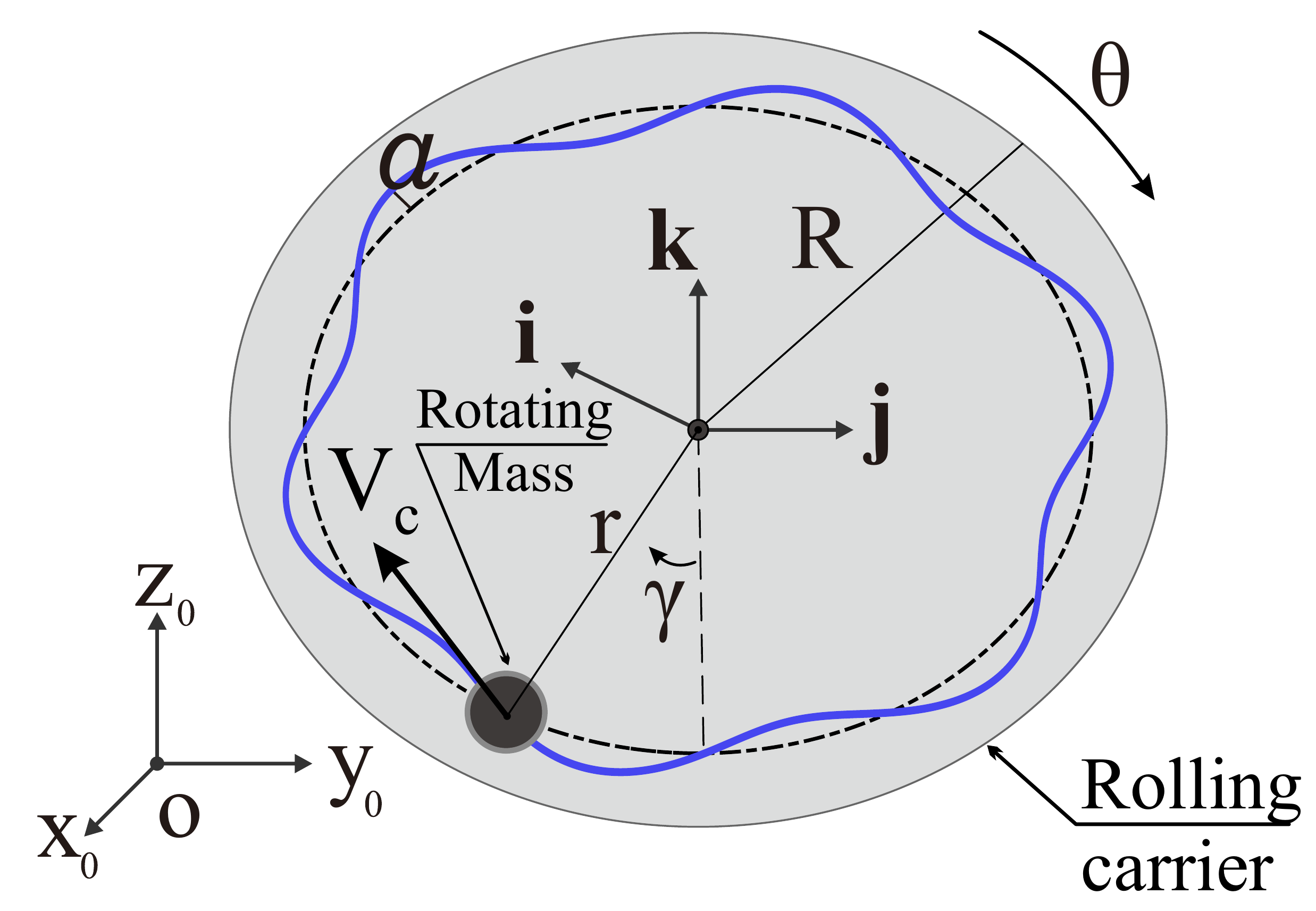}	
		\includegraphics[width=1.6 in]{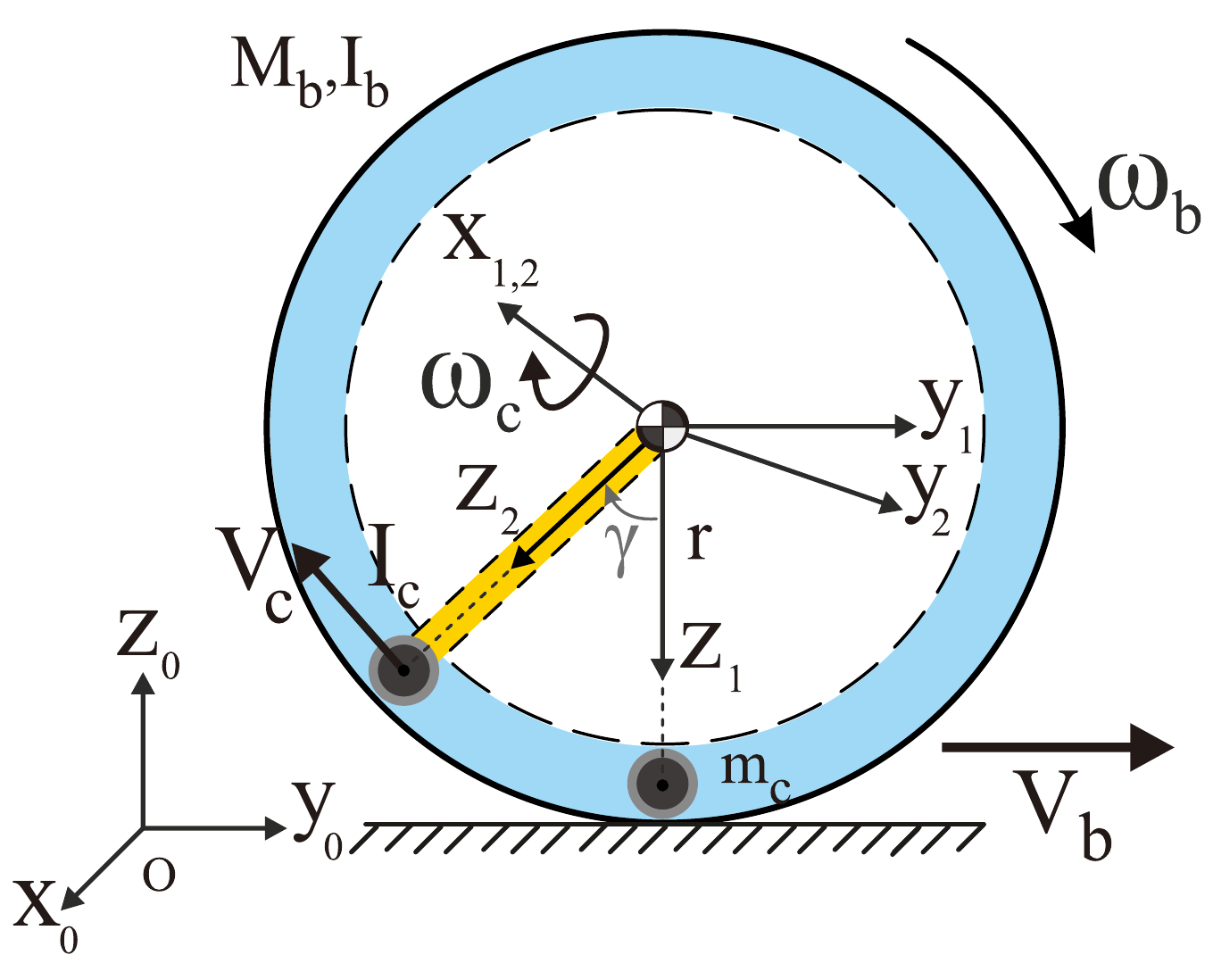}
      \hspace{30 pt} (a) \hspace{120 pt} (b)
	\caption{a) A rotating mass with the trajectory of a sine wave (blue color) on a circle with radius $r$ (black dashed line). b) Rolling carrier motion along y axis and frame transformations. 
	}\label{Fig:wavy2}
\end{figure}  We aim to design $n$, $a$ and $\varepsilon$ depending on the obtained relations from the inertial matrix to removes the inertial coupling singularity \footnote{Please check Theorem \ref{prop:singularfreeparametersd} for details.\label{footi}} while rotating mass follows around wavy circle. Also, the deviation of trajectory $\uvec{D}_c$ with respect to circular radius $r$ can be found as 
\begin{align}
\Delta D_c= r-\parallel \uvec{D_c}\parallel  =a\sin(n(\gamma+\theta)+\varepsilon) , 
\label{Eq:deviationTrajectory}
\end{align}
where $\parallel\cdot\parallel$ stands for the module of the variable. Maximum value for this deviation $\Delta D_c$ will be the amplitude of wave $a$, which we assume always $a \ll r$ and $a \ll R$. If the property ($a \ll r,R$) is satisfied, the effect of included sine wave can be ignored in dynamic models. However, the larger wave amplitudes $a$ can be also realized in driving mechanisms. For instance, a prismatic joint can move the mass periodically on the lead of the pendulum-actuated systems \cite{stilling2002controlling,gandino2014damping} or fluid-actuated systems can have a pipe in corresponding wave form.

To find the rolling kinematics, the coordinate frames are sketched as Fig. \ref{Fig:wavy2}-b. Here, $x_0 y_0 z_0$ represents the reference frame. The moving frame connected to the center of the spherical carrier is $x_1y_1z_1$, which
translates with respect to reference frame $x_0y_0z_0$. Finally, $x_2y_2z_2$ is a rotating frame for the rotating mass-point attached to the center of spherical carrier and it is rotating with respect to
$x_1y_1z_1$. The corresponding kinematics for a rolling carrier with rotational mass is 
 \begin{align}
 \bm\omega_b=\dot{\theta}\uvec{i},\uvec{V}_b=R\dot{\theta}\uvec{j},\; \bm\omega_{c}=\left(\dot{\gamma}+\dot{\theta}\right)\uvec{i},\;\uvec{V}_{c}=\uvec{V}_b+\dot{\uvec{D}}_c
 \label{geomtrycharaccorevel}
 \end{align} 
 where $\bm\omega_b$, $\uvec{V}_b$, $\bm\omega_{c}$ and $\uvec{V}_{c}$ are the angular and linear velocities of the carrier and the angular and linear velocities of the rotating mass.  
 
\subsection{Nonlinear Dynamics}
The non-linear dynamics of the rolling spherical carrier with a planar motion is derived from the proposed trajectory equation. To find the corresponding motion equations, the Lagrangian equations are utilized.

We consider a sphere as a passive carrier (passive joint) where it is actuated with the rotation of a spherical mass as Fig. \ref{Fig:wavy2}. The carrier has a mass of $M_b$ excluding the rotating mass. Also, the rotating mass with the mass $m_c$ is assumed as a mass-point. The Lagrangian function of the rolling carrier with the rotating mass, including kinetic and potential energies, along $y$ axis is described \cite{TAfrishiARM2019} as follows
\begin{align}
\begin{split}
&E_L=\dfrac{1}{2}M_b\parallel \uvec{V}_b \parallel ^2+\dfrac{1}{2}I_b\parallel \bm\omega_b\parallel^2+\dfrac{1}{2}m_c\parallel\uvec{V}_{c}\parallel^2\\
&+\dfrac{1}{2}I_c\parallel  \bm\omega_c\parallel ^2-m_c gd_{c},
\end{split}
\label{EQ:EnergyEquationtotalforces}
\end{align}
where $I_b=2 M_bR^2 / 3$, $I_c$, $g$ and $d_c$ are the inertia tensor of rolling passive carrier, an arbitrary inertia tensor $I_c$ connected to the mass-point, the acceleration of gravity and the distance of the mass-point respect to the ground, respectively. We include the inertia tensor $I_c$ for the sake of generality that its rotation is with the respect to carrier central frame $x_1y_1z_1$. This arbitrary inertia tensor $I_c$ can be considered as either the lead of rotating pendulum \cite{kayacan2012modeling,svinin2015dynamic} (yellow pendulum in Fig. \ref{Fig:wavy2}) or interacting fluid/gas inside pipes for the rotating spherical mass \cite{TAfrishiRussiCha2019}  (blue fluid/gas in Fig. \ref{Fig:wavy2}). After the substitution of Eq. (\ref{geomtrycharaccorevel}) into (\ref{EQ:EnergyEquationtotalforces}), one obtains
\begin{align}
\begin{split}
&E_L=\dfrac{1}{2}R^2\dot \theta^2M_{b}+\dfrac{1}{2}I_b\dot \theta^2+\dfrac{1}{2}I_c(\dot\gamma+\dot \theta)^2+\dfrac{1}{2}m_c\Big[\big [R\dot\theta\\
&- \left(\dot\gamma+\dot\theta \right)\left(a n \cos \left(n (\gamma+\theta) + \varepsilon \right) \right)\sin(\gamma+\theta)\\
& +  ( r+a \sin (n (\gamma+\theta) +\varepsilon) ) \cos (\gamma+\theta) \big ]^2 \\
&+ \left(\dot\gamma+\dot\theta \right)^2 \big [ a n \cos (n (\gamma+\theta) +\varepsilon) \cos (\gamma+\theta) \\
& -\left(r+a \sin \left( n (\gamma+\theta) + \varepsilon \right) \right)\sin(\gamma+\theta) \big ]^2 \Big]\\
&- m_c g [r+a\sin (n (\gamma+\theta)+ \varepsilon) ] (1-\cos(\gamma+\theta))\\
\end{split}
\label{Eq:lastlagrangian}
\end{align}

Finally, we apply the Lagrangian equations for planar translation along y axis as following 
\begin{align}
\dfrac{d}{dt}\left(\dfrac{\partial E_L}{\partial \dot \gamma}\right)- \dfrac{\partial E_L}{\partial \gamma}= \tau_\gamma,\;\dfrac{d}{dt}\left(\dfrac{\partial E_L}{\partial \dot \theta}\right)- \dfrac{\partial E_L}{\partial \theta}= \tau_\theta,
\label{Eq:LagrangianEquationmain}
\end{align}
where $\tau_{\gamma}$ and $\tau_{\theta}$ are the external torques for the rotating mass and the sphere. Acting external torque between the surfaces of the spherical mass and carrier body is assumed zero, $\tau_{\theta}=0$, since mass-point doesn't contain any spinning around itself and it only rotates with the respect to the carrier center $x_1y_1z_1$. After doing the necessary operations by Eqs. (\ref{Eq:lastlagrangian})-(\ref{Eq:LagrangianEquationmain}), the terms of the equations of the motion (\ref{Eq:MotionEquationGeneral}) for this underactuated system becomes
\begin{align}
\begin{split}
&M_{11}=I_c+M_b R^2+I_b+m_c R^2-2m_c R \mu_{1} + m_c \mu_2, \\
&M_{12}=M_{21}=I_c -m_c R \mu_1 + m_c \mu_2,M_{22}=I_c+ m_c \mu_2,\\
&N_{1}=-m_c R (\dot\gamma+\dot\theta)^2 \mu_3 + m_c (\dot\gamma+\dot\theta)^2 \mu_4 ,\\
&N_{2 }=   m_c  (\dot\gamma+\dot\theta)^2 \mu_4 , G_{1}= G_{2}=m_c   g \mu_5.
\end{split}
\label{Eq:NONLINEARDynamics} 
\end{align}
while,
\begin{align}
\begin{split}
&\mu_1= \left(a n \cos \left(n (\gamma+\theta) + \varepsilon \right) \right)\sin(\gamma+\theta)\\
& +  ( r+a \sin (n (\gamma+\theta) +\varepsilon) ) \cos (\gamma+\theta),  \\
&\mu_2= a^2 n^2 \cos^2 \left(n (\gamma+\theta)+ \varepsilon \right)+ (r+a\sin(n(\gamma+\theta)+ \varepsilon ))^2, \\
&\mu_3= -a n^2 \sin \left(n (\gamma+\theta)+ \varepsilon \right) \sin (\gamma+\theta) \\
& + 2 a n \cos(n(\gamma+\theta) +\varepsilon) \cos(\gamma+\theta)\\
& -(r+a\sin(n(\gamma+\theta) +\varepsilon))\sin(\gamma+\theta),  \\
&\mu_4= -a^2 n^3 \sin \left(n (\gamma+\theta)+ \varepsilon \right) \cos (n(\gamma+\theta) +\varepsilon) \\
& + a n \cos(n(\gamma+\theta ) +\varepsilon) (r+a\sin(n(\gamma+\theta)+\varepsilon)) \\
&\mu_5=  a n \cos(n(\gamma+\theta)+\varepsilon) (1-\cos(\gamma+\theta))\\
&+ ( r+a \sin (n (\gamma+\theta) +\varepsilon) ) \sin(\gamma+\theta).   
\end{split}
\label{Eq:termsofmusindirect}
\end{align}                      
 
\section{INVERSE DYNAMICS AND SINGULARITY}
In this section, the non-linear dynamics are presented in inverse form. A general condition for removing the singularity is derived and the singularity regions are analyzed for classic rolling systems. Next, we propose our theory for determining the parameters of the combined wave to avoid the singularity regions that originate from the coupled inertia matrix. Finally, a Beta function as feed-forward control for specifying the spherical carrier rotation is given.

The non-linear dynamics (\ref{Eq:MotionEquationGeneral}) with Eq. (\ref{Eq:NONLINEARDynamics}) are re-ordered with the goal to find the input torque $\tau_{\gamma}$ from the specified rolling carrier states ($\theta, \; \dot\theta,\; \ddot\theta$). Hence, the rolling constraint of the carrier and the rotating mass differential equations in Eq. (\ref{Eq:MotionEquationGeneral}) becomes
\begin{align}
\begin{split}
&\ddot{\gamma}=-\frac{1}{M_{12}}\left( M_{11} \ddot{\theta}+N_1+G_1\right),\\
&\tau_{\gamma}=   M_{21} \ddot{\theta}+ M_{22} \ddot{\gamma}+N_2+G_2.
\end{split}
\label{Eq:InversefirstEQ}
\end{align}
Now, we knew from (\ref{Eq:MotionEquationGeneral}) that inertial matrix $\uvec{M}(q)$ is always a positive definite and symmetric matrix \cite{agrawal1991inertia} where upper-left determinants grant this condition by $M_{11}>0$ and $M_{11}M_{22}-M_{12}M_{21}>0$. To extend these conditions to the derived inverse dynamics, the rolling constraint (first differential equation) in Eq. (\ref{Eq:InversefirstEQ}) is substituted into the second differential equation as follows
\begin{align}
\overline{\tau}_{\gamma}= \overline{M} \ddot{\theta}+\overline{N} +\overline{G},
\label{Eq:FinalEqInverse}
\end{align}
where 
\begin{align}
\begin{split}
&\overline{\tau}_{\gamma}=-\tau_{\gamma},\;
\overline{M}= M^{-1}_{12}\cdot \left(M_{11}M_{22}-M_{12}M_{21} \right),\;\\
&\overline{N}= M_{22}M^{-1}_{12}N_1-N_2 ,\;
\overline{G}= M_{22}M^{-1}_{12}G_1-G_2.
\label{Eq:ExtraFinalTerms}
\end{split}
\end{align}
Because the mass-point and the carrier rotation are opposite of each other in our motion and for the sake of the simplicity, we assume $\overline{\tau}_{\gamma}=-\tau_{\gamma}$. By relying on the Ref.\cite{spong1994partial}, the coupled inertia matrix $\overline{M}>0$ should be positive definite as well. However, denominator in $\overline{M}>0$ requires another extra condition that $M^{-1}_{12}>0$. Under the condition of $M_{11}M_{22}-M_{12}M_{21}>0$,  there exist singularities in the solution of Eq. (\ref{Eq:FinalEqInverse}) for the cases when $M_{12}\leq0$ ($\tau_{\gamma}\rightarrow \infty$) \cite{agrawal1991inertia}. Thus, following proposition as the condition of the singularity is expressed.
\begin{prop}
	Let the inverse non-linear dynamics (\ref{Eq:FinalEqInverse})-(\ref{Eq:ExtraFinalTerms}) are for the rolling system with the trajectory $\uvec{D}_c$ in (\ref{Eq:Sinsiodal}). Given $M_{12}>0$, the underactuated system does not hit any singularity, if following condition is satisfied
	\begin{align}
	\mu^2_{1a}+\mu^2_{1b}+\frac{I_c}{m_c}>R\left[\mu_{1a}\sin(\gamma+\theta)
	+\mu_{1b}\cos(\gamma+\theta)\right],
	\label{Eq:condtionsingularity}
	\end{align}
	where $\mu_{1a}$ and $\mu_{1b}$ are the first and second terms of $\mu_1$.
	\label{Propos:Singularitycondition}
\end{prop}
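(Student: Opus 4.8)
The plan is to recognize that the inequality (\ref{Eq:condtionsingularity}) is nothing other than the scalar condition $M_{12}>0$ written out in closed form, and that $M_{12}>0$ is precisely what keeps the inverse dynamics (\ref{Eq:FinalEqInverse})--(\ref{Eq:ExtraFinalTerms}) free of singularities. To make the second part precise I would first observe that eliminating $\ddot\gamma$ through the passive constraint in (\ref{Eq:InversefirstEQ}) divides by $M_{12}$ and produces the reduced inertia $\overline{M}=M_{12}^{-1}(M_{11}M_{22}-M_{12}M_{21})$ of (\ref{Eq:FinalEqInverse}). Since $\uvec{M}(\bm{q})$ is the Hessian with respect to $\bm{\dot{q}}$ of the kinetic part of $E_L$ in (\ref{Eq:lastlagrangian}) and the $\mu_i$ stay bounded (with $a,r,R$ finite), $\uvec{M}(\bm{q})$ is symmetric positive definite at every configuration, so $M_{11}>0$ and $\det\uvec{M}=M_{11}M_{22}-M_{12}M_{21}>0$ hold identically. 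Hence $\overline{M}$ is finite and positive \emph{iff} $M_{12}>0$: at $M_{12}=0$ the map $(\theta,\dot\theta,\ddot\theta)\mapsto(\ddot\gamma,\tau_\gamma)$ blows up, and for $M_{12}<0$ one gets $\overline{M}<0$, contradicting the positive-definiteness that the reduced model must inherit (cf.\ \cite{spong1994partial,agrawal1991inertia}). Therefore ``no singularity'' is equivalent to $M_{12}>0$.

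It then remains to rewrite $M_{12}>0$. Substituting $M_{12}=I_c-m_cR\mu_1+m_c\mu_2$ from (\ref{Eq:NONLINEARDynamics}) and dividing by $m_c>0$ turns it into $\mu_2+I_c/m_c>R\mu_1$. Reading off from (\ref{Eq:termsofmusindirect}) the two factors that make up $\mu_1$, namely $\mu_{1a}=an\cos(n(\gamma+\theta)+\varepsilon)$ and $\mu_{1b}=r+a\sin(n(\gamma+\theta)+\varepsilon)$, one has $\mu_1=\mu_{1a}\sin(\gamma+\theta)+\mu_{1b}\cos(\gamma+\theta)$ and, directly from the definition of $\mu_2$, the Pythagorean-type identity $\mu_2=\mu_{1a}^2+\mu_{1b}^2$. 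Substituting both into $\mu_2+I_c/m_c>R\mu_1$ gives exactly (\ref{Eq:condtionsingularity}); conversely (\ref{Eq:condtionsingularity}) is literally the same inequality, so it forces $M_{12}>0$, and by the first step the inverse dynamics then has no singularity. This closes the argument.

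The computational core is tiny --- essentially only the identity $\mu_2=\mu_{1a}^2+\mu_{1b}^2$, which is immediate. The one point I expect to require care is the logical reduction in the first step: one must exclude \emph{both} $M_{12}=0$, because of the division by $M_{12}$ in the constraint equation, \emph{and} $M_{12}<0$, because of the sign it induces on the reduced inertia $\overline{M}$; this is exactly why the statement asks for $M_{12}>0$ rather than merely $M_{12}\neq 0$, and the always-valid facts $M_{11}>0$ and $\det\uvec{M}>0$ are what make this dichotomy clean. Once that is in place, (\ref{Eq:condtionsingularity}) follows by a single substitution.
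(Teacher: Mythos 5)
Your proposal is correct and follows essentially the same route as the paper: the paper's proof likewise takes $M_{12}=I_c-m_cR\mu_1+m_c\mu_2>0$ as the no-singularity condition (justified by the preceding discussion of $\overline{M}=M_{12}^{-1}(M_{11}M_{22}-M_{12}M_{21})$ and the cited coupling results), then uses the identities $\mu_2=\mu_{1a}^2+\mu_{1b}^2$ and $\mu_1=\mu_{1a}\sin(\gamma+\theta)+\mu_{1b}\cos(\gamma+\theta)$ to rewrite it as (\ref{Eq:condtionsingularity}). Your added justification of why $M_{12}\le 0$ must be excluded is a slightly more explicit version of what the paper states in the text before the proposition, not a different argument.
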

\begin{proof}
	Consider the inertia term $M_{12}$ in (\ref{Eq:ExtraFinalTerms}) always positive to the avoid singularity 
	\begin{align}
	M_{12}= I_c -m_c R \mu_1 + m_c \mu_2 >0.
	\label{Eq:Condtiotnsingularitymc}
	\end{align}
	Then, $\mu_{1a}$ and $\mu_{1b}$ terms are defined from $\mu_1$ in (\ref{Eq:termsofmusindirect}) as 
	\begin{align*}
	\begin{split}
	&\mu_{1a}= a n \cos \left(n (\gamma+\theta) + \varepsilon \right) \\
	& \mu_{1b}=  r+a \sin (n (\gamma+\theta) +\varepsilon)  .
	\end{split}
	\end{align*}
	where there are $\mu_2=\mu_{1a}^2+\mu_{1b}^2$ and $\mu_1=\mu_{1a}\sin(\gamma+\theta)+\mu_{1b}\cos(\gamma+\theta)$. Then, $\mu_{1a}$ and $\mu_{1b}$ are substituted to inequality (\ref{Eq:Condtiotnsingularitymc}) as follows
	\begin{align}
	\begin{split}
	&m_c(\mu^2_{1a}+\mu^2_{1b})-m_cR(\mu_{1a}\sin(\gamma+\theta)\\
	&+\mu_{1b}\cos(\gamma+\theta))+I_c>0
	\end{split}
	\label{Eq:Thefinalinequality}
	\end{align}
	Finally, the condition (\ref{Eq:condtionsingularity}) is found by reordering inequality (\ref{Eq:Thefinalinequality}).
\end{proof}
 
Before designing our combined wave model under the Proposition \ref{Propos:Singularitycondition}, we check the singularity regions for the different classical underactuated rolling systems. Note that in these cases the trajectory is considered as an ideal circle (\ref{Eq:classicrotatingmasstrajectry}) without any consideration of our combined sinusoidal curve. 
\begin{example}
	Singularity region of a classical rotating mass-point system \cite{TAfrishiARM2019,TafrishiASME2019}, where $I_c=0$, are analyzed using condition (\ref{Eq:condtionsingularity}) in Proposition \ref{Propos:Singularitycondition}. Let the trajectory $\uvec{D}_c$ be a perfect circle with radius $r$, which makes $\mu_{1a}=0$ and $\mu_{1b}=r$ as (\ref{Eq:classicrotatingmasstrajectry}) when $a=n=\varepsilon=0$. From the given condition (\ref{Eq:condtionsingularity}), the inequality is transformed to 
	\begin{align}
	r^2 >Rr\cos(\gamma+\theta)
	\label{Eq:Theconditionmass-pintsin}
	\end{align}
	\begin{figure}[t!]
		\centering
		\includegraphics[width=2.6 in]{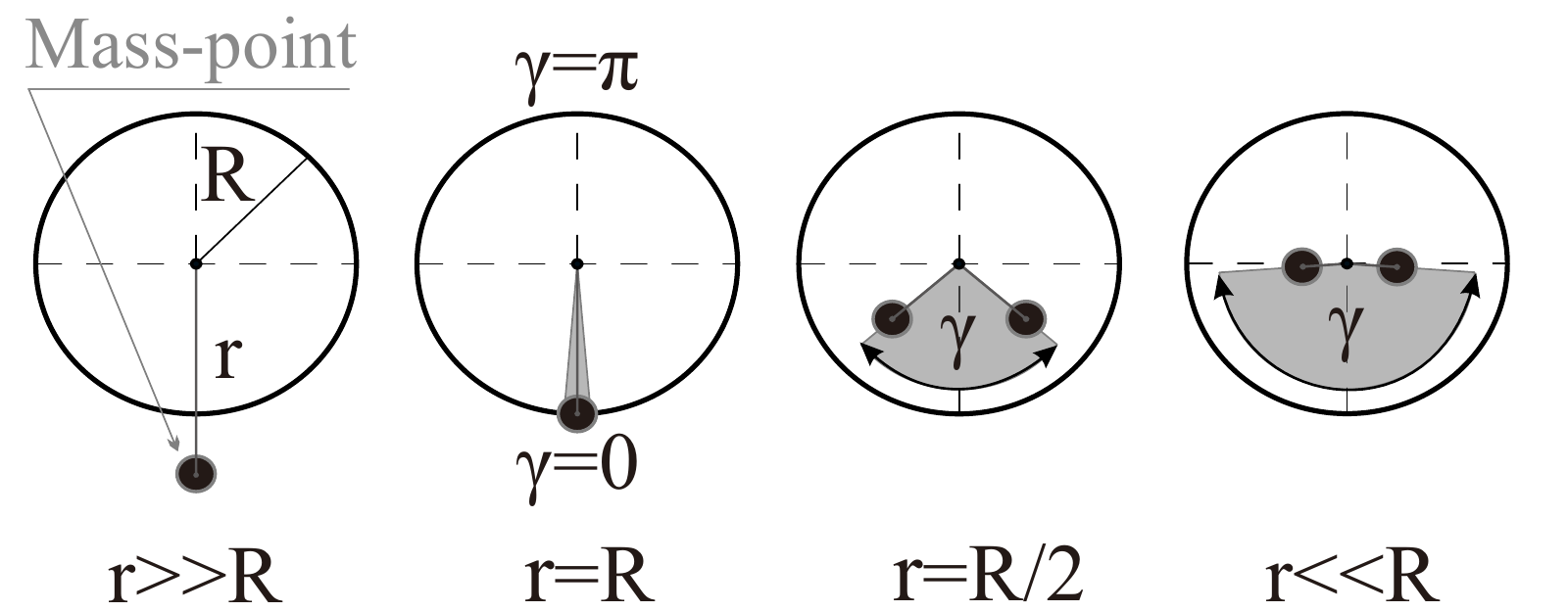}	
		\caption{Singularity regions in gray color increases as mass-point distance $r$ decreases while the carrier is steady ($\theta=0$).}\label{Fig:Singualarity_region}
	\end{figure}
	Now, by considering the maximum possible value for $\cos(\gamma+\theta)\approx1$, one obtains a limitation on the geometric parametrization as 
	\begin{align}
	\frac{r}{R}>1.
	\label{Eq:distancemasspointexample}
	\end{align}
    This means in designing this mass-point system, the inverse dynamics model (\ref{Eq:InversefirstEQ}) will hit singularity if the radius of rotating mass be less than rolling carrier as condition (\ref{Eq:distancemasspointexample}) and singularity disobeys the physical mechanics completely. Fig. \ref{Fig:Singualarity_region} shows how changes in the geometric parameters ($r,R$) in (\ref{Eq:Theconditionmass-pintsin}) affect singular configurations of the mass-point (\ref{Eq:classicrotatingmasstrajectry}) on the steady spherical carrier ($\theta=0$). This graphic clarifies that rolling systems without any angular constraint on $\gamma$ and $\theta$ will hit the singularity. Otherwise, the solution of (\ref{Eq:InversefirstEQ}) will break many times while this singular region changes by spherical carrier rotation, $\gamma+\theta$.
\end{example}

\begin{example}
    A rotating mass system with arbitrary inertial tensor is chosen in this example. This inertia tensor $I_c$ can be related to a rod that connects the mass to the center of the rolling body \cite{kayacan2012modeling,svinin2015dynamic} or an interacted water with the rotating mass in pipes \cite{TAfrishiRussiCha2019,TafrishiASME2019}. Thus, with a circular trajectory as the previous example, condition (\ref{Eq:distancemasspointexample}) is transformed to
	\begin{align}
	m_cr^2+I_c>m_cRr \cos(\gamma+\theta),
	\label{Eq:CouplingconditioninIc}
	\end{align}
	By sorting this condition based on the $I_c$ with considering $\cos(\gamma+\theta)\approx1$, we see that singularity can be avoided only when 
	\begin{align}
	I_c>m_cr(R-r).
	\label{Eq:SingularFreePendl}
	\end{align} 
    Similar to the previous example, singularity limits the inverse dynamics for only certain mechanisms that can satisfy the following geometric condition.
\end{example}

To show that our proposed approach removes the demonstrated singularity regions in Eq. (\ref{Eq:Theconditionmass-pintsin}) and Eq. (\ref{Eq:CouplingconditioninIc}), and how parameters of the included wave should be designed analytically a theory is developed.
\begin{thm}
	\label{prop:singularfreeparametersd}
	The inverse dynamics (\ref{Eq:FinalEqInverse}) with a combined sinusoidal wave (\ref{Eq:Sinsiodal}) never hit singularity and positive definiteness of $\overline{M}>0$ is granted when variables $a$, $n$ and $\varepsilon$ of the small-amplitude wave ($a \ll r, R$ and $n>2$) are satisfying following inequalities
	\begin{equation} {\small
	\begin{split} 
	&  r^2+\frac{a^2}{2}\left [n^2+(n^2-1) \cos{2\varepsilon}+1 \right]	+ \frac{I_c}{m_c} > \Delta \mu_{1} \\ 
	&  r^2+\frac{a^2}{2}\left [n^2+(n^2-1) \cos 2\varepsilon+1 \right]	+ \frac{I_c}{m_c}> \frac{\sqrt{2}}{2}  R r + \Delta \mu_{2} \\
	&   r^2+\frac{a^2}{2}\left [n^2+(n^2-1) \cos 2\varepsilon+1 \right]	+ \frac{I_c}{m_c} > R r + \Delta \mu_{3}
	\end{split}
	\label{Eq:ConditionsForsingularitDesign}}
	\end{equation}	
	where 
	\begin{equation*}
	{\footnotesize
	\begin{split}
	& \Delta \mu_1=a \cdot \Bigg|2r\sin\left(\tan^{-1}\left(\frac{-2r}{Rn}\right)\right)-Rn\cos\left(\tan^{-1}\left(\frac{-2r}{Rn}\right)\right)\Bigg|,\\
	&\Delta \mu_2=\frac{\sqrt{2}a}{2}\bigg|(2\sqrt{2}r-R)\sin\left(\tan^{-1}\left(\left(R-2\sqrt{2}r\right)/Rn\right)\right)\\
	&-Rn\cos\left(\tan^{-1}\left(\left(R-2\sqrt{2}r\right)/Rn\right)\right)\bigg|,\\
	&\Delta \mu_3=a\cdot |2r-R|.
	\end{split}}
	\end{equation*}
\end{thm}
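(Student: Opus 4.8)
The plan is to derive the three inequalities directly from Proposition~\ref{Propos:Singularitycondition}, whose condition $\mu_{1a}^2+\mu_{1b}^2+I_c/m_c > R[\mu_{1a}\sin(\gamma+\theta)+\mu_{1b}\cos(\gamma+\theta)]$ is, after multiplying by $m_c$, exactly the statement $M_{12}>0$; combined with $M_{11}M_{22}-M_{12}M_{21}>0$ (automatic from positive definiteness of $\uvec M$) this makes $\overline M = M_{12}^{-1}(M_{11}M_{22}-M_{12}M_{21})>0$ and removes every singularity of (\ref{Eq:FinalEqInverse}). So it suffices to show that the three displayed inequalities force the Proposition~\ref{Propos:Singularitycondition} condition for all configurations. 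First I would set $\psi:=\gamma+\theta$ and $\phi:=n\psi+\varepsilon$, substitute $\mu_{1a}=an\cos\phi$ and $\mu_{1b}=r+a\sin\phi$, and expand both sides.

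On the left-hand side, using $n^2\cos^2\phi+\sin^2\phi=\tfrac12[(n^2+1)+(n^2-1)\cos 2\phi]$ gives $\mu_{1a}^2+\mu_{1b}^2 = r^2+2ar\sin\phi+\tfrac{a^2}{2}[(n^2+1)+(n^2-1)\cos 2\phi]$. Under $a\ll r,R$ the term $\tfrac{a^2}{2}(n^2-1)\cos 2\phi$ is second order, so I would fix its phase at the reference value, i.e.\ replace $\cos 2\phi$ by $\cos 2\varepsilon$; what remains constant on the left is then $r^2+\tfrac{a^2}{2}[n^2+(n^2-1)\cos 2\varepsilon+1]+I_c/m_c$, the common left side of (\ref{Eq:ConditionsForsingularitDesign}). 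On the right-hand side, $R[\mu_{1a}\sin\psi+\mu_{1b}\cos\psi]=Rr\cos\psi+aR[n\cos\phi\sin\psi+\sin\phi\cos\psi]$, so after moving the surviving $\phi$-dependent pieces of both sides to the right the requirement becomes $r^2+\tfrac{a^2}{2}[n^2+(n^2-1)\cos 2\varepsilon+1]+I_c/m_c > Rr\cos\psi + a\big[(Rn\sin\psi)\cos\phi+(R\cos\psi-2r)\sin\phi\big]$.

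The last bracket is a single sinusoid in $\phi$ of amplitude $a\sqrt{R^2n^2\sin^2\psi+(R\cos\psi-2r)^2}$, attained at the phase with $\tan\phi=(R\cos\psi-2r)/(Rn\sin\psi)$. Bounding it by that amplitude and evaluating at the three representative carrier angles $\psi=\tfrac{\pi}{2},\tfrac{\pi}{4},0$ reproduces the statement: $\cos\psi$ equals $0,\tfrac{\sqrt2}{2},1$, giving the $Rr$-terms $0,\tfrac{\sqrt2}{2}Rr,Rr$, while the amplitude collapses to $a\sqrt{4r^2+R^2n^2}$, $\tfrac{\sqrt2 a}{2}\sqrt{(2\sqrt2 r-R)^2+R^2n^2}$ and $a|2r-R|$ respectively, and writing the maximizing phases through $\tan^{-1}(\cdot)$ yields exactly $\Delta\mu_1,\Delta\mu_2,\Delta\mu_3$. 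Hence if the three inequalities hold then $M_{12}>0$ along the motion, so $\overline M>0$ and (\ref{Eq:FinalEqInverse}) is singularity-free.

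The delicate points, and where I expect the real work to lie, are twofold. First, justifying the replacement of $\cos 2\phi$ by $\cos 2\varepsilon$: strictly this needs the $a^2(n^2-1)$ contribution to be dominated by the slack in the inequality, which is where $a\ll r,R$ and $n>2$ actually enter. Second, the passage from ``for every $\psi$'' to three sampled angles; a fully rigorous version would maximize the whole right side $Rr\cos\psi + a\sqrt{R^2n^2\sin^2\psi+(R\cos\psi-2r)^2}$ over $\psi\in[0,2\pi)$ and compare with the constant left side, and the three displayed inequalities should be read as the dominant constraints of that envelope (worst $Rr\cos\psi$ at $\psi=0$, worst wave amplitude at $\psi=\tfrac{\pi}{2}$, intermediate trade-off at $\psi=\tfrac{\pi}{4}$). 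Proving that no in-between $\psi$ gives a strictly stronger constraint is the step I would handle most carefully.
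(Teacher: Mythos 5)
Your derivation is correct and lands on exactly the paper's three inequalities, but by a genuinely different and more elementary route. The paper starts from the same Proposition~\ref{Propos:Singularitycondition} condition, splits the right-hand side into the three cases (\ref{Eq:InequalitiesNEw3side}) evaluated at $\zeta_1=(2k+1)\pi/2$, $\zeta_2=(2k+1)\pi/4$, $\zeta_3=(k+1)\pi$ (your $\psi=\pi/2,\pi/4,0$), then pushes both sides through the Fourier transform (\ref{Eq:FourierEquation}), compares the impulse content frequency by frequency, inverts back to (\ref{Eq:Amplitude3TranssWavestimedomain}), and obtains $\Delta\mu_1,\Delta\mu_2$ from a tangency (equal-slope) computation that produces the angles $\tan^{-1}(-2r/Rn)$ and $\tan^{-1}\big((R-2\sqrt{2}r)/Rn\big)$. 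You bypass the transform entirely: after moving $2ar\sin\phi$ to the right, the frequency-$n$ content is the single sinusoid $a\left[(Rn\sin\psi)\cos\phi+(R\cos\psi-2r)\sin\phi\right]$, whose exact amplitude $a\sqrt{R^2n^2\sin^2\psi+(R\cos\psi-2r)^2}$ at $\psi=\pi/2,\pi/4,0$ coincides with $\Delta\mu_1,\Delta\mu_2,\Delta\mu_3$ (e.g.\ $\Delta\mu_1=a\sqrt{4r^2+R^2n^2}$), and your maximizing phase $\tan\phi=(R\cos\psi-2r)/(Rn\sin\psi)$ is precisely the paper's tangency angle. Your version buys transparency and rigor at that step: the $\Delta\mu_i$ are simply sinusoid amplitudes, with no impulse bookkeeping and no application of a linear transform to an inequality (a genuinely delicate move in the paper); the paper's Fourier framing buys a systematic separation of the dc, frequency-$n$ and frequency-$2n$ parts, which it then uses to argue the $\cos 2\varepsilon$ simplification via $n>2$. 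The two delicate points you flag are not resolved in the paper either: its three cases are exactly your three sampled angles, intermediate values of $\psi$ (where the right-hand side can approach $R\sqrt{\mu_{1a}^2+\mu_{1b}^2}$) are never examined, and the replacement of $\cos(2(n\zeta+\varepsilon))$ by $\cos 2\varepsilon$ is asserted rather than bounded. So your proposal is as complete as the published proof, and your closing suggestion---maximize the full envelope $Rr\cos\psi+a\sqrt{R^2n^2\sin^2\psi+(R\cos\psi-2r)^2}$ over all $\psi$ and compare with the constant left side---would in fact strengthen the theorem's justification.
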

\begin{proof}
	Let the singularity condition (\ref{Eq:condtionsingularity}) from Proposition \ref{Propos:Singularitycondition} be
	\begin{align*}
	\mu^2_{1a}+\mu^2_{1b}+(I_c/m_c)>R(\mu_{1a}\sin(\gamma+\theta)
	+\mu_{1b}\cos(\gamma+\theta)).
	\end{align*}
  	To have the left-hand side of the inequality always larger than the right-side, we find the absolute value of right-side in the three cases
	\begin{equation}{\small
	\begin{split}
	&1)\; \mu^2_{1a}+\mu^2_{1b}+\frac{I_c}{m_c}>R\cdot|\mu_{1a}|, \;\;\;\;\;\;\;\;\;\;\;\;\;\;\;\;\;\;\;\;\; \; \zeta_1=\frac{(2k+1)\pi}{2}\\
	&2)\;\mu^2_{1a}+\mu^2_{1b}+\frac{I_c}{m_c}>\frac{\sqrt{2}R}{2}\cdot\left(|\mu_{1a}|+|\mu_{1b}|\right),\; \zeta_2=\frac{(2k+1)\pi}{4}\\
	&3)\;\mu^2_{1a}+\mu^2_{1b}+\frac{I_c}{m_c}>R\cdot|\mu_{1b}|, \;\;\;\;\;\;\;\;\;\;\;\;\;\;\;\;\;\;\;\;\; \;\zeta_3=(k+1)\pi
	\end{split} }
	\label{Eq:InequalitiesNEw3side}
	\end{equation}
	where $\zeta_i=\gamma+\theta$. Next, we utilize the Fourier Transform equations \cite{vretblad2003fourier} as follows
	\begin{align}
	\begin{split}
	&H(w)=\frac{1}{\sqrt{2\pi}} \int_{-\infty}^{\infty}\mu(\zeta_i) e^{-j w \zeta_i } d\zeta_i, \\
	&\mu(\zeta_i)=\frac{1}{\sqrt{2\pi}} \int_{-\infty}^{\infty}H(w) e^{j w \zeta_i } dw, 
	\end{split}	
	\label{Eq:FourierEquation}
	\end{align}
	where $H(w)$ and $w$ are the transformed term of $\mu$ and the frequency of corresponding $\mu$. 
	With applying Fourier Transform (\ref{Eq:FourierEquation}) to each side of inequalities in (\ref{Eq:InequalitiesNEw3side}), under linearity property \cite{vretblad2003fourier}, one obtains  
	\begin{align}
	\begin{split}
	&1)\;H^2_{1a}+H^2_{1b}+(I'_c(w)/m_c)>R\cdot|H_{1a}|, \\
	&2)\;H^2_{1a}+H^2_{1b}+ \left(I'_c(w)/m_c\right) > \sqrt{2}R \cdot\left(|H_{1a}|+|H_{1b}|\right)/2,\\ 
	&3)\;H^2_{1a}+H^2_{1b}+(I'_c(w)/m_c)>R\cdot|H_{1b}|,
	\end{split}
	\label{Eq:Inequalitiesfourirerpart}
	\end{align}
	where $I'_c(w)$ is the Fourier Transform of the inertia tensor of $I_c$, while these terms become 
	\begin{align}
	\begin{split}
	&\;H^2_{1a}+H^2_{1b}= \pi (a^2 n^2+a^2+2r^2) \delta(w) \\
	&+(\pi a^2 (n^2-1)/2) \cdot \left[ e^{-2j \varepsilon }\delta(2n+w)+e^{2j \varepsilon } \delta(2n-w)  \right] \\
	&+2 \pi a  r  \big[ e^{-j \varepsilon }\delta(n+w)-e^{j \varepsilon } \delta(n-w)  \big]j,\\
	&\;|H_{1a}|= \pi a n \big[ e^{-j \varepsilon }\delta(n+w)+e^{j \varepsilon } \delta(n-w)  \big] ,\\
	&\;|H_{1b}|= 2\pi r\delta(w) + \pi a  \big[ e^{-j \varepsilon }\delta(n+w)-e^{j \varepsilon } \delta(n-w)  \big] j.
	\label{Eq:CondtionFourierTransform}
	\end{split}
	\end{align}
By using transformed equation (\ref{Eq:CondtionFourierTransform}), the terms are simplified to two base waves for comparison: the first term is the constant shift by $\delta(w)$ and the second is the sinusoidal waves, $\delta(n+w)+\delta(n-w)$. Because the angular rotation $\zeta(\gamma,\theta)$ of the waves in both sides of inequality is always same, each side of (\ref{Eq:Inequalitiesfourirerpart}) can be compared relative to its multiplier $\delta$ with the same frequency $w$. By the known insight in the expressed property, all three conditions in (\ref{Eq:Inequalitiesfourirerpart}) are collected for each sinusoidal impulses $\delta$ in the given frequency $w$. 
	\begin{equation} {\small
	\begin{split}
	&1)\begin{cases}
	& 2rj (e^{-j \varepsilon }\delta(n+w)-e^{j \varepsilon } \delta(n-w)) > Rn \big [ e^{-j \varepsilon }  \\
	&\delta(n+w)+e^{j \varepsilon } \delta(n-w) \big ]
	\end{cases}\\
	&2)\begin{cases}
	& 4rj (e^{-j \varepsilon }\delta(n+w)-e^{j \varepsilon } \delta(n-w)) >   \sqrt{2}R \big [ e^{-j \varepsilon }
	\\
	&\delta(n+w) (n+j)+e^{j \varepsilon } \delta(n-w)(n-j) \big ]
	\end{cases}\\
	&3)\begin{cases}
	& 2rj (e^{-j \varepsilon }\delta(n+w)-e^{j \varepsilon } \delta(n-w)) > Rj \big [e^{-j \varepsilon }\delta(n \\
	&+w)-e^{j \varepsilon } \delta(n-w) \big ]
	\end{cases}
	\end{split}}
	\label{Eq:Amplitude3TranssWaves} 
	\end{equation} 
	Now, the minimum shift $\Delta \mu$ for having the left-hand side larger than the right-hand has to be computed by taking the Inverse Fourier Transform from (\ref{Eq:Amplitude3TranssWaves}). Thus, the inverse Fourier Transform of (\ref{Eq:Amplitude3TranssWaves}) is calculated by (\ref{Eq:FourierEquation}) for each case
	\begin{equation}{\small
		\begin{split}
		&1) \; 2ra\sin(n \zeta_1 +\varepsilon) > R a n\cos(n \zeta_1 +\varepsilon )\\
		&2)\;  4ra \sin(n \zeta_2 +\varepsilon) > \sqrt{2}R \Big [a\sin(n \zeta_2 +\varepsilon)+a n\cos(n \zeta_2 +\varepsilon) \Big ]\\
		&3)\;2ra \sin(n \zeta_3 +\varepsilon) > Ra \sin( \zeta_3 +\varepsilon)
		\end{split}}
	\label{Eq:Amplitude3TranssWavestimedomain}
	\end{equation}
	
	By checking (\ref{Eq:Amplitude3TranssWavestimedomain}), we see that there are $\pi/2$ and $\pi/4$ phase differences between each side of sinusoidal curves in conditions 1 and 2, respectively. To find the required shift for first two conditions, the derivative of time-domain forms in (\ref{Eq:Amplitude3TranssWavestimedomain}) are derived
	\begin{align*}
	\begin{split}
	&1) \frac{d}{d\zeta_1}\left[\frac{Rn \cos(n \zeta_1+\varepsilon)}{2r\sin(n \zeta_1+\varepsilon)}\right]= \frac{-Rn \sin(n \zeta_1+\varepsilon)}{2r\cos(n \zeta_1+\varepsilon )}=1,\\
	&2) \frac{d}{d\zeta_2}\left[\frac{\sqrt{2}R[n \cos(n \zeta_2+\varepsilon )+\sin(n \zeta_2+\varepsilon)]}{4r\sin(n\zeta_2+\varepsilon)}\right]\\
	&=\frac{\sqrt{2}R[-n\sin(n\zeta_2+\varepsilon)+\cos(n\zeta_2+\varepsilon)]}{4r\cos(n\zeta_2+\varepsilon)}=1.
	\end{split}
	\end{align*}
	Then, we solve it for $\gamma_1=n \zeta_1+\varepsilon$ and $\gamma_2=n \zeta_2+\varepsilon$, and find the tangential point of two waves when their slopes are the same 
	\begin{equation}
	\begin{split}
	&\gamma_1=  \tan^{-1}(-2r/Rn),\\ &\gamma_2=  \tan^{-1}\left(\left(R-2\sqrt{2}r\right)/R n\right).
	\end{split}
	\label{Eq:Gamma3anglecondition}
	\end{equation}
	We define $\Delta \mu_1$ and $\Delta \mu_2$ as the minimum required shifts for the right-hand side of inequalities to be always larger than left in conditions 1 and 2 by equaling both sides of (\ref{Eq:Amplitude3TranssWavestimedomain}) as
	\begin{align}
	\begin{split}
	&\Delta \mu_1+ R a n \cos(\gamma_1)=2ra \sin(\gamma_1),\\
	&\Delta \mu_2+ \frac{\sqrt{2}a }{2}R \left [ \sin(\gamma_2)+a n \cos(\gamma_2) \right]=2ra \sin(\gamma_2).
	\label{Eq:delta1Equation}
	\end{split}
	\end{align}
	The third condition in (\ref{Eq:Amplitude3TranssWavestimedomain}) is easy to compute because waves of both sides are in the same phase, hence,  $\Delta \mu_3$ is obtained by maximum amplitude difference  
	\begin{align}
	\Delta \mu_3=a \cdot |2r-R|.
	\label{Eq:delta3Equation}
	\end{align}
	Finally, substituting (\ref{Eq:Gamma3anglecondition}) into (\ref{Eq:delta1Equation}) and reordering them with respect to the minimum shifts $\Delta \mu_i$ results in 
	\begin{equation}{\footnotesize
	\begin{split}
	&1) \Delta \mu_1=a \cdot \Bigg|2r\sin\left(\tan^{-1}\left(\frac{-2r}{Rn}\right)\right)-Rn\cos\left(\tan^{-1}\left(\frac{-2r}{Rn}\right)\right)\Bigg|\\
	&2) \Delta \mu_2=\frac{\sqrt{2}a}{2}\bigg|(2\sqrt{2}r-R)\sin\left(\tan^{-1}\left(\left(R-2\sqrt{2}r\right)/Rn\right)\right)\\
	&-Rn\cos\left(\tan^{-1}\left(\left(R-2\sqrt{2}r\right)/Rn_i\right)\right)\bigg|\\
	&3) \Delta \mu_3=a\cdot |2r-R|
	\end{split}}
	\label{Eq:Comparesinespi}
	\end{equation}
    By putting Eq. (\ref{Eq:Comparesinespi}) back into inequality (\ref{Eq:Inequalitiesfourirerpart}) and taking the Inverse Fourier for single wave with $2n$ frequency and constant shift for $\delta(w)$, we have
    	\begin{align}
    \begin{split} 
    & 1)\; \frac{a^2 (n^2+1) }{2}+ \frac{a^2 (n^2-1) }{2}\cos(2(n \zeta_1 +\varepsilon))+r^2\\ 
    &+ \frac{I_c}{m_c} > \Delta \mu_{1}(a,n), \\
    &2)\; \frac{a^2 (n^2+1) }{2}+ \frac{a^2 (n^2-1) }{2}\cos(2(n \zeta_2 +\varepsilon))+r^2\\ 
    &+ \frac{I_c}{m_c} > \frac{\sqrt{2}}{2}Rr+\Delta \mu_{2}(a,n), \\
    &3)\; \frac{a^2 (n^2+1) }{2}+ \frac{a^2 (n^2-1) }{2}\cos(2(n \zeta_3 +\varepsilon))+r^2\\ 
    &+ \frac{I_c}{m_c} > Rr+\Delta \mu_{3}(a,n), 
    \end{split}
    \label{Eq:ConditionsForsingularitDesignBefore}
    \end{align}	
    To simplify the second term at right-hand side of inequalities (\ref{Eq:ConditionsForsingularitDesignBefore}), we choose $n>2$ which transfer $\cos(2(n \zeta +\varepsilon))$ to $\cos{2\varepsilon}$ in all conditions $\{\zeta_1,\zeta_2,\zeta_3\}$. Under the given assumption ($n>2$), Eq. (\ref{Eq:ConditionsForsingularitDesign}) can be derived from (\ref{Eq:ConditionsForsingularitDesignBefore}). 
\end{proof}
\begin{rem}
    Because the inertia tensor $I_c$ of the rotating mass is normally related to geometric objects (connecting cylindrical bar of pendulum) with a constant radius, it has been included as the constant value to the inequality.  
    \end{rem}
\begin{rem}    This Theory \ref{prop:singularfreeparametersd} can easily be extended for any underactuated system with two-link manipulators (for example the Acrobat) since $M_{12}$ term is in common in all models and does not have the inertia tensor of carrier $I_b$.
\end{rem}

In this study, we choose a 4th order Beta function \cite{svinin2015dynamic,TAfrishiRussiCha2019} to arrive the carrier $
\theta(t)$ toward its desired final configurations $\theta_{des}$ by
\begin{equation}
\begin{split}
\theta(t)= k \left( -\frac{20}{T^7}t^7+\frac{70}{T^6}t^6-\frac{84}{T^5}t^5+\frac{35}{T^4}t^4 \right),
\end{split}
\label{Eq:BetaFunctions}
\end{equation}
where $T$ and $k$ are the time constant of designed motion and the value for the final arrived distance $\theta(T)=k=\theta_{\textrm{des}}$. We expect from this feed-forward control to actuate the rotating mass like $\gamma(t)=d^2\theta(t) /dt^2$ from (\ref{Eq:BetaFunctions}) as a two-step motion. This two-step motion of rotating mass [see $\gamma+\theta$ at Fig. \ref{Fig:MassPointSimu2}-b as an example of this motion pattern] is followed by a counterclockwise rotation till certain angle $\gamma_{max}$ and a similar clockwise rotation for returning to the rest position. Note that similar to what has been developed in
\cite{svinin2015dynamic,TAfrishiRussiCha2019}, one can show that with the selection of this motion scenario the condition $\dot\theta(T)=0$ is always satisfied.
\section{SIMULATION ANALYSIS}
In this section, the proposed theory is analyzed in the simulation space. At first, to evaluate our model in the worst-case scenario, a mass-point system with $I_c=0$ is chosen. We find the singular-free model with satisfying the conditions of the proposed theorem. Next, to compare the modified model with the classic model, we compare both cases when there is an inertia tensor $I_c$ as a pendulum system. 

\begin{table}[b!]
	\renewcommand{\arraystretch}{1}
	\caption{Value of parameters for the simulation studies.}
	\label{Tab:DesignParameter}
	\centering
		\begin{tabular}{cccc}
			\hline
			Variable & Value & Variable & Value\\
			\hline
			$m_c$ & $0.4\;$kg&$r$ &  $0.131 \;$m\\
			$M_b$ &$ 1 \; $ kg & $R$  &  $0.145\;$ m\\
			$g$&$9.8 \;\; $m/s$^2$& $I_b$&$0.0140\;$kg$\cdot$m$^2$\\
			\hline
	\end{tabular}
\end{table}
\begin{figure}[t!]
	\centering
	\includegraphics[width=2.5 in,height=1.4 in]{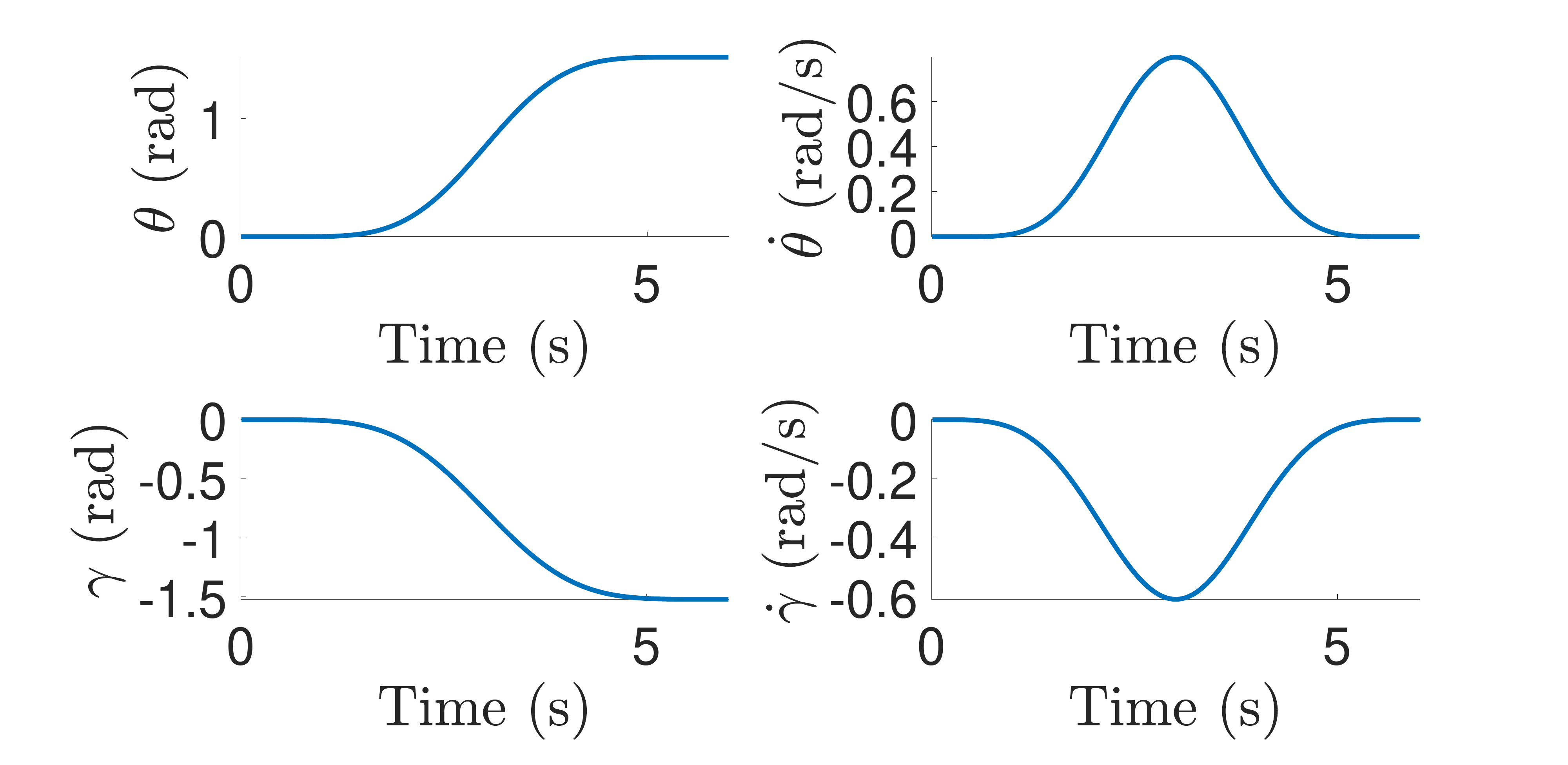}	
	\caption{Example simulation for passive carrier $\{\theta,\dot\theta\}$ and rotating mass-point $\{\gamma,\dot\gamma\}$ states by the modified model.}\label{Fig:TestSingularity1}\label{Fig:MassPointSimu1}
\end{figure}

Matlab ODE45 is used during the simulation studies of derived model for solving (\ref{Eq:termsofmusindirect}) and (\ref{Eq:FinalEqInverse})-(\ref{Eq:ExtraFinalTerms}) equations. The accuracy of the solver for relative and absolute errors are 0.0001 and
0.0001, respectively. The geometric parameters of the considered physical system are like Table \ref{Tab:DesignParameter} \cite{TafrishiASME2019}. Note that this system fails the singular-free coupling condition (\ref{Eq:distancemasspointexample}) by having $r<R$, as shown in Example 1, which makes the system to be in a singular region. To prescribe the angular orientation of the spherical carrier the introduced Beta functions in (\ref{Eq:BetaFunctions}) is applied where $k=\theta_{des}$ is $\pi/2$ rad. The simulation is run for $6$ s with $T=6$. The robot begins from rest condition and it is expected to reach rest position at end of simulation time by the prescribed Beta function.


\begin{figure}[t!]
	\centering
	a)	\includegraphics[width=2.3 in,height=.9 in]{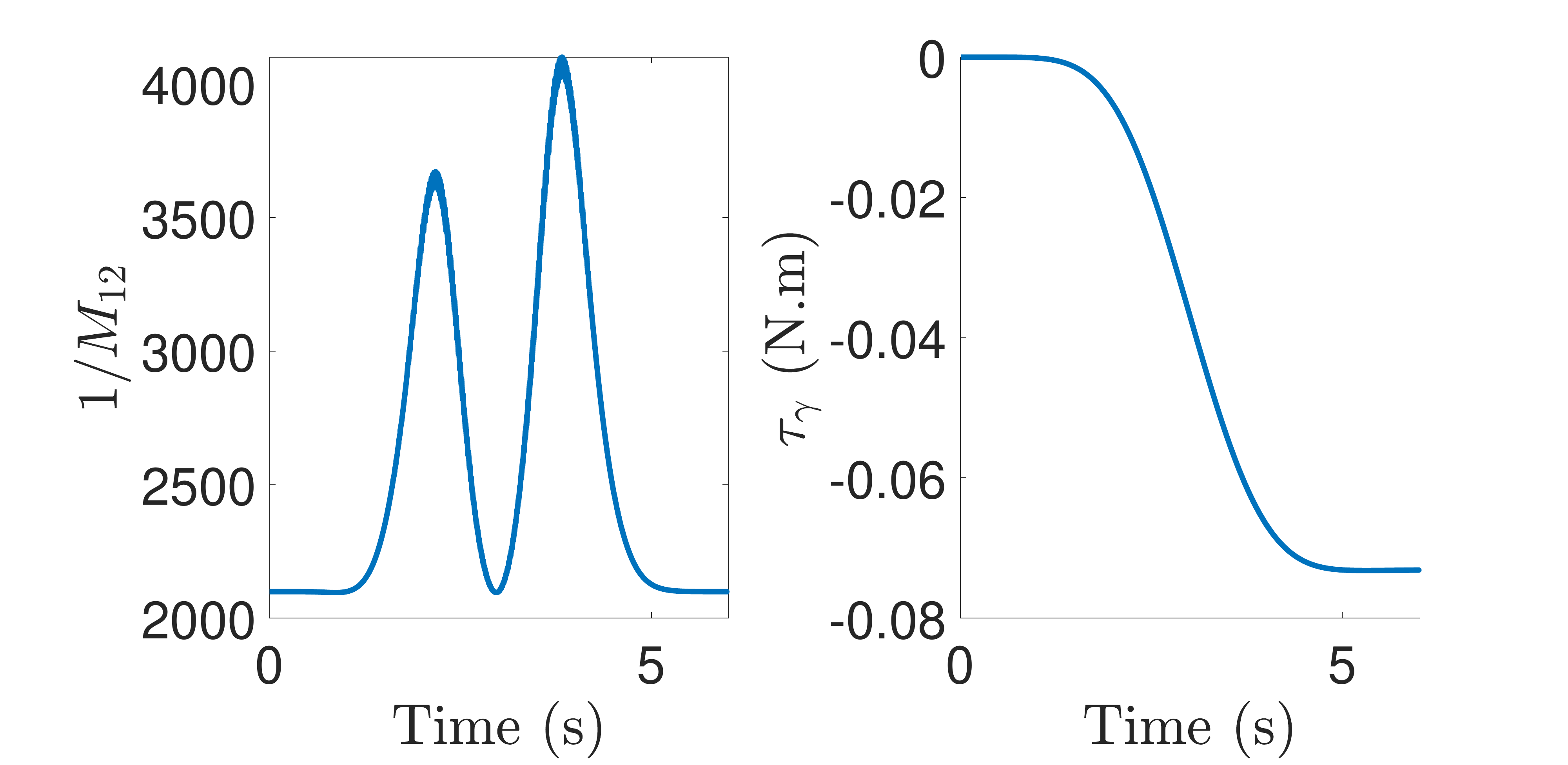}\\
	b)	\includegraphics[width=2.3 in,height=.9 in]{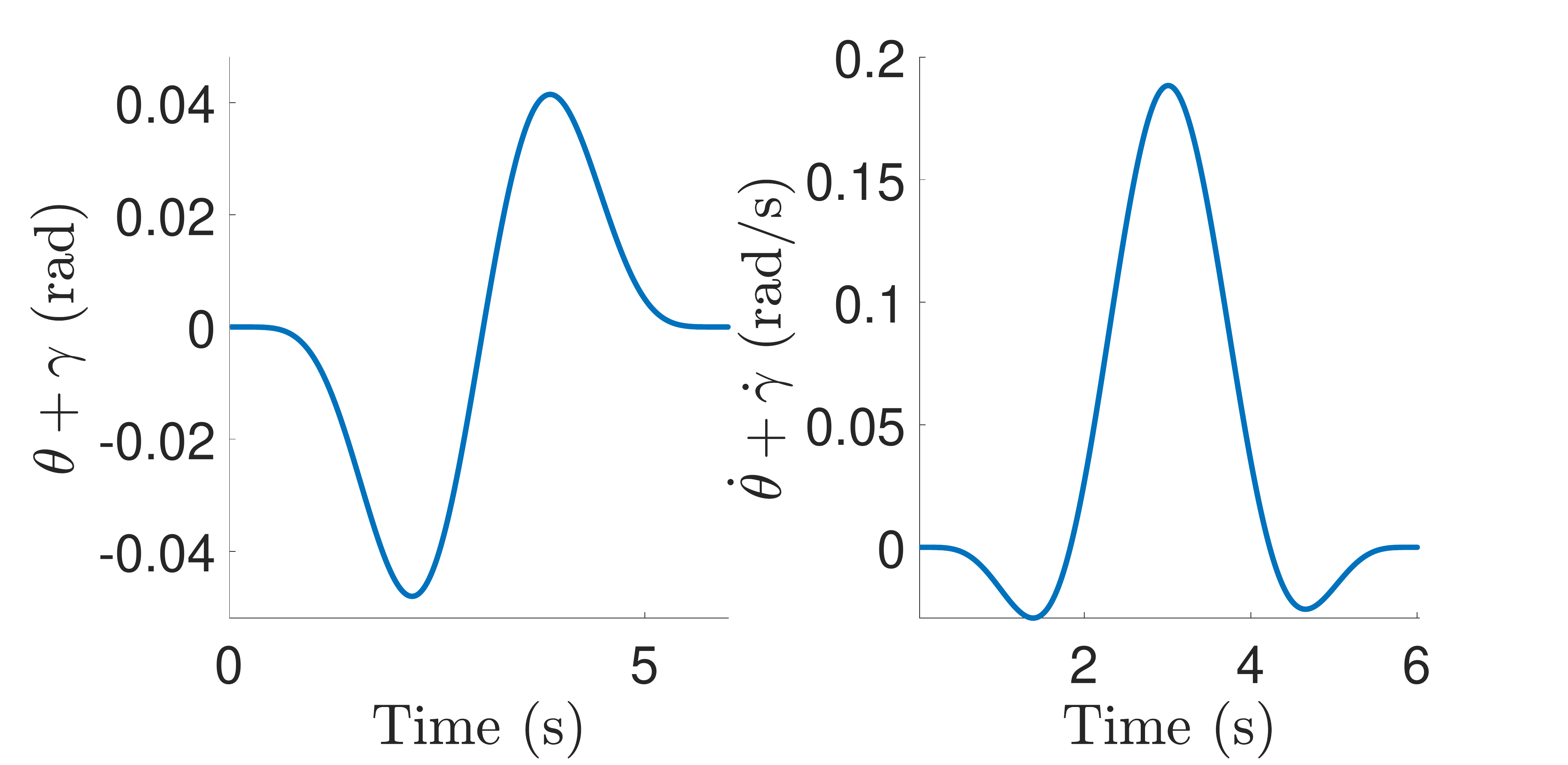}	
	\caption{a) Inertia term and output torque results for modified inverse dynamics, b) the true location and velocity of the rotating mass respect to reference frame.}\label{Fig:MassPointSimu2}
\end{figure}

To solve the singularity, we utilize the proposed Theory \ref{prop:singularfreeparametersd} where $a$, $n$ and $\varepsilon$ are designed under the condition of (\ref{Eq:ConditionsForsingularitDesign}). By substituting the values of the geometric parameters from Table into the conditions (\ref{Eq:ConditionsForsingularitDesign}), we choose our wave parameters with first maximum value as $a= 0.0055$, $n=10$ and $\varepsilon=0$ which satisfy all three inequalities. Note, if we have an inertia tensor $I_c$, depending on the designed mechanism, wave amplitude $a$ can be chosen smaller as indicated in inequalities (\ref{Eq:ConditionsForsingularitDesign}).

By running the simulation with obtained parameters, we see that the inverse dynamics are integrated without hitting any singularity [See Fig. \ref{Fig:MassPointSimu1}-b and \ref{Fig:MassPointSimu2}]. 
As expected, the rotating mass follows a smooth two-phase motion with the applied feed-forward control by the Beta function [see Fig. \ref{Fig:MassPointSimu2}-b]. Also, the control torque $\tau_{\gamma}$ as the output is produced responsively with solving the modified nonlinear dynamics which displaces the spherical carrier to the desired location with the rest-to-rest motion.  
 \begin{figure}[t!]
 	\centering
 	\includegraphics[width=2.2 in,height=.9 in]{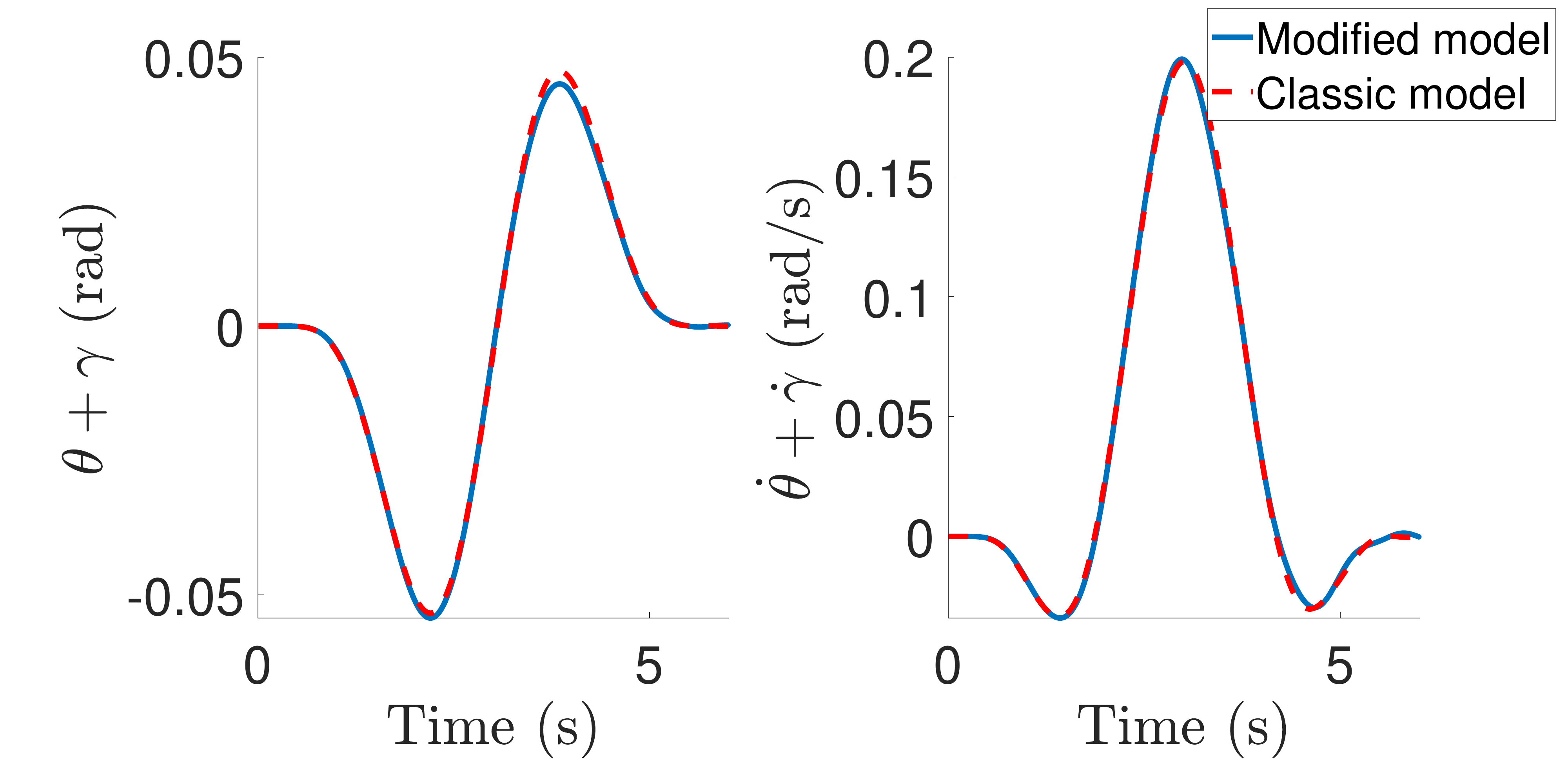}	
 	\caption{Compared results for a classic and modified motion equations in a pendulum system with the inertia tensor $I_c$.}\label{Fig:CompareClassicModi}
 \end{figure}

Finally, we show a comparison between a classic rotating pendulum system with our modified singular-free model. We do this comparison for the sake of clarification that model with small-amplitude combined wave doesn't hurt/diverge the motion equations while it is removing the singularities due to coupling. All the simulation parameters are similar to the previous case-study except we include an inertia tensor of cylindrical pendulum $I_c=m_lr^2/3=0.0057$ kg$\cdot$m$^2$ that is connected to the mass-point as Fig. \ref{Fig:wavy2}. Also, we simulate the classic model of a pendulum system for the rolling sphere from Refs. \cite{kayacan2012modeling,svinin2015dynamic} which same model can be derived by specifying $a=n=\varepsilon=0$ at Eq. (\ref{Eq:termsofmusindirect}). Fig. \ref{Fig:CompareClassicModi} shows that our model doesn't have any dissimilarity with the classic model with included wave on the circular trajectory, $a \ll r,R$. In this case, note that inclusion of $I_c$ variable satisfies the singular-free coupling condition in Eq. (\ref{Eq:SingularFreePendl}). Finally, our designed theorem can easily work for the geometries that singular-free coupling conditions was limited by (\ref{Eq:Theconditionmass-pintsin}) and (\ref{Eq:SingularFreePendl}) (worst-case as a mass-point) conditions in previous studies.  

\section{CONCLUSIONS}

In this paper, we proposed that designing the trajectory of the rotating mass via the combined sine wave omits the singularities in the underactuated systems. We started by introducing the kinematics of the small-amplitude wave on the circular rotation of a mass-point. Next, the modified nonlinear dynamics of the underactuated rolling system were derived by Lagrangian equations. Before developing the condition for singularity-free inverse dynamics, the singularity regions relative to the geometric parametrization are shown for the spherical rolling systems. In the end, the solution of the developed theorem is demonstrated with example simulations where the states of the rolling carrier are specified by the Beta function.

As the advantage of our modified model, the solution is free from any complex algorithm or space transformation. This can resolve the limitations of the physical mechanism design due to the inertial coupling in the underactuated systems with 2 degrees of freedom. Also, it facilitates the applications of different advanced feedback controllers without any limited configurations. In the future, we plan to extend this theory for holonomic mechanisms with multiple degrees-of-freedom and we will check how can designing each passive constraint with various phase-shifted waves would prevent the integrated singular configurations.

\section*{Acknowledgment}
This research was supported, in part, by the Japan Science and Technology Agency, the JST Strategic International Collaborative Research Program, Project No. 18065977.

\bibliographystyle{IEEEtran}
\bibliography{NonlinearSingularityIEEEShort}



\addtolength{\textheight}{-12cm}   

\end{document}